\begin{document}
\pagestyle{headings}
\mainmatter
\def\ECCVSubNumber{1227}  

\title{Learning Disentangled Representations with Latent Variation Predictability} 

\titlerunning{Variation Predictability for Disentanglement}
%
\author{Xinqi Zhu \and Chang Xu \and Dacheng Tao}
\authorrunning{X. Zhu et al.}
%
\institute{UBTECH Sydney AI Centre, School of Computer Science, Faculty of Engineering, \\
    The University of Sydney, Darlington, NSW 2008, Australia\\
    \email{\{xzhu7491@uni.,c.xu@,dacheng.tao@\}sydney.edu.au}}
\maketitle

\begin{abstract}
    Latent traversal is a popular approach to visualize the
    disentangled latent representations. Given a bunch of variations
    in a single unit of the latent representation,
    it is expected that there is a change in a single
    factor of variation of the data while others are fixed.
    However, this impressive experimental observation is
    rarely explicitly encoded in the objective function of
    learning disentangled representations. This paper defines
    the \emph{variation predictability} of latent disentangled
    representations. Given image pairs generated by latent codes varying
    in a single dimension, this varied dimension could be closely
    correlated with these image pairs if the representation
    is well disentangled. Within an adversarial generation process,
    we encourage variation predictability by maximizing the
    mutual information between latent variations and corresponding
    image pairs. We further develop an evaluation metric that
    does not rely on the ground-truth generative factors to measure
    the disentanglement of latent representations.
    The proposed variation predictability is a general constraint
    that is applicable to the VAE and GAN frameworks for
    boosting disentanglement of latent representations.
    Experiments show that the proposed variation
    predictability correlates well with existing ground-truth-required
    metrics and the proposed algorithm
    is effective for disentanglement learning.

\end{abstract}

\section{Introduction}
\label{sec:introduction}
Nowadays learning interpretable representations from
high-dimensional data is of central importance for
downstream tasks such as classification
\cite{Chen2016InfoGANIR,Dupont2018LearningDJ,Jeong2019LearningDA},
domain adaptation \cite{Peng2019DomainAL,Yang2019UnsupervisedDA,SutMilSchBau19},
fair machine learing \cite{Creager2019FlexiblyFR,Locatello2019OnTF},
and reasoning \cite{Steenkiste2019AreDR}.
To achieve this goal, a series of work have been conducted
\cite{Chen2016InfoGANIR,Higgins2017betaVAELB,Kim2018DisentanglingBF,Dupont2018LearningDJ,chen2018isolating,Jeong2019LearningDA}
under the subject of \emph{disentangled representation learning}.
Although there is not a widely adopted mathmatical definition for
disentanglement, a conceptually agreed definition can be expressed
as that each unit of a disentangled representation should
capture a single interpretable variation of the data
\cite{Bengio2012RepresentationLA}.

\begin{figure}[t]
\begin{center}
   \includegraphics[width=\linewidth]{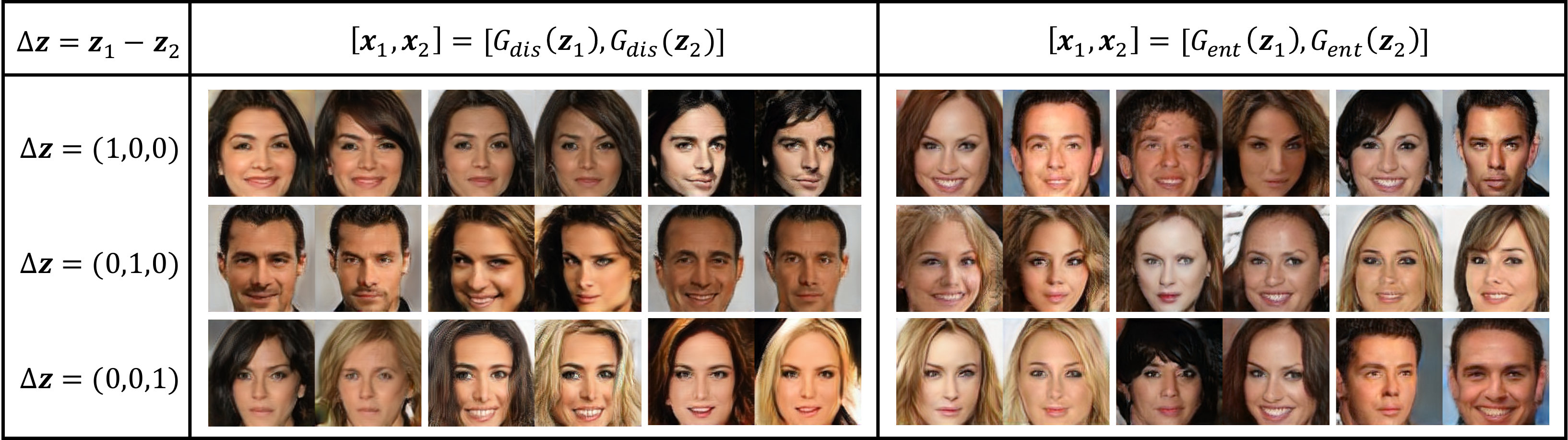}
\end{center}
    \caption{This is a table showing generated image pairs
    $[\boldsymbol{x}_{1}, \boldsymbol{x}_{2}]$ whose latent codes
    $[\boldsymbol{z}_{1}, \boldsymbol{z}_{2}]$ have difference
    in a single dimension
    $\Delta \boldsymbol{z} = \boldsymbol{z}_{1} - \boldsymbol{z}_{2}$.
    Left: image pairs generated by a disentangled generator $G_{dis}$.
    Right: image pairs generated by an entangled generator $G_{ent}$.
    In each row, the latent code difference $\Delta \boldsymbol{z}$ is
    kept fixed with only one dimention modified.
    For the disentangled image pairs (left),
    it is not difficult to tell that each row
    represents the semantics of \emph{fringe}, \emph{smile}, and
    \emph{hair color} respectively.
    However, for entangled ones (right)
    the semantics are not clear although the image pairs are
    also generated with a single dimension varied in the latent codes
    just like the left ones.}
\label{fig:intuition}
\end{figure}

One line of current most promising disentanglement methods derives from
$\beta$-VAE \cite{Higgins2017betaVAELB}, with its
variants such as FactorVAE \cite{Kim2018DisentanglingBF} and
$\beta$-TCVAE \cite{chen2018isolating} developed later.
This series of works mainly
realize disentanglement by enforcing the
independence in the latent variables. Although independence assumption
is an effective proxy for the learning of disentanglement,
this assumption can be unrealistic for real-world data as the
underlying distribution of the semantic factors may not be factorizable.
Additionally since these models are defined based upon
the Variational Autoencoder (VAE) framework \cite{Kingma2013AutoEncodingVB}
which intrinsically causes blurriness in the generated data,
their applications are mostly limited to synthetic data and real-world
data of small sizes. Another line of work to achieve
disentanglement purpose is by using
InfoGAN \cite{Chen2016InfoGANIR}, which encourages disentanglement
through maximizing the mutual information between the generated data
and a subset of latent codes. This model inherits merits from
GAN \cite{Goodfellow2014GenerativeAN} so that sharper and more realistic
images can be synthesized thus can be applied to more complex datasets.
However, we show in the experiments that
the disentanglement performance of InfoGAN is limited
and the training of InfoGAN is less stable compared with our
proposed GAN-based models.
A problem in existing disentanglement learning community is the
lack of evaluation metrics that can give a quantitative
measurement of the performance of disentanglement. Existing
metrics depends on the existence of ground-truth
generative factors and an encoder network
\cite{Higgins2017betaVAELB,Kim2018DisentanglingBF,chen2018isolating,Ridgeway2018LearningDD,Eastwood2018AFF,SutMilSchBau19},
thus the quantitative measurements are usually done on
synthetic data with VAE framework,
leaving latent traversal inspection by human
the only evaluation method for experiments on real-world data,
and this to an extent discourages the development of GAN-based models,
which are known to be effective in photorealistic image synthesis,
from disentangled representation learning.

Different from the existing disentanglement learning methods, we
reconsider the problem from the perspective of \emph{Variation}.
We argue that disentanglement can be natually described by
the correlated variations between the latent codes
and the observations, and that is
why researchers initially use latent traversals as a method to evaluate
whether a representation is disentangled.
This intuition is based on an assumption
that determining the semantics of a dimension in
disentangled representations is easy but
in entangled representations is difficult.
In Fig. \ref{fig:intuition} we show an example of this
interesting phenomenon.
On the left and right parts of Fig. \ref{fig:intuition},
there are image pairs generated by two models
(one disentangled and one entangled).
All these image pairs are generated by
varying a single dimension in the latent codes, and the varied dimension
is the same for each row (the varied dimension is indicated
by $\Delta \boldsymbol{z}$ on the left by onehot representation).
However, as there can be multiple
different latent code pairs [$\boldsymbol{z}_{1}$, $\boldsymbol{z}_{2}$] that cause
the same $\Delta \boldsymbol{z}$, the generated image pairs
can be diverse.
In Fig. \ref{fig:intuition}, it is not difficult to
tell from the image pairs that the rows
for the left model control the semantics of
\emph{fringe}, \emph{smile}, and \emph{hair color} respectively,
while for the right model the semantics are not clear.
This \emph{easy-to-tell} property
for the left model is due to the
consistent pattern in the shown image pairs,
while there is not a clear pattern for the right model
so we cannot tell what each row is controlling.

This phenomenon for distinguishing disentangled and entangled representations
motivates us to model the latent \emph{variation predictability}
as a proxy to achieve disentanglement, which is defined based on
the difficulty of predicting the varied dimension in latent codes from
the corresponding image pairs.
Taking the case in Fig. \ref{fig:intuition} as an example,
we say the left model has a higher latent
variation predictability than the right one, which corresponds
with the fact that we only need very small number of image pairs to
tell what semantics each row is controlling for the left model.
Note that rows in the entangled model (right one) may also contain stable
patterns (may not correspond to a specific semantics),
but it is difficult for us to tell what they are
from such few images.
By exploiting the variation predictability, our contributions
in this paper can be summarized as follows:
\begin{itemize}
    \item By regarding the variation predictability as a proxy for
        achieving disentanglement, we design a
        new objective which is general and effective
        for learning disentangled representations.
    \item Based on the definition of variation predictability, we propose a
        new disentanglement evaluation metric for quantitatively measuring
        the performance of disentanglement models, without
        the requirement of the ground-truth generative factors.
    \item Experiments on various datasets are conducted
        to show the effectiveness of our proposed metric and
        models. 
\end{itemize}

\section{Related Work}
\label{sec:related_work}
\textbf{Generative Adversarial Networks.} Since the introduction of
the initial GAN by Goodfellow et. al. \cite{Goodfellow2014GenerativeAN},
the performance of GANs have been thoroughly improved from
various perspectives, e.g.
generative quality
\cite{radford2015unsupervised,Karras2017ProgressiveGO,Han2019SAGAN,Brock2018LargeSG,Karras2020ASG,Karras2019AnalyzingAI},
and training stability
\cite{Arjovsky2017WassersteinGA,berthelot2017began,Gulrajani2017ImprovedTO,Kodali2018OnCA,Miyato2018SpectralNF,Karras2019AnalyzingAI}.
However, the study of semantics learned in the latent space is
less exploited.
Chen et al. \cite{Chen2016InfoGANIR}
propose InfoGAN which successfully learns disentangled representations
by maximizing the mutual information
between a subset of latent variables and the generated sampels.
Donahue et al. \cite{Donahue2017SemanticallyDT} introduce
Semantically Decomposed GANs which encourage a specified portion
of the latent space to correspond to a known source of variation, resulting
in the decomposition of identity and other contingent aspects of observations.
In \cite{Tran2017DisentangledRL}, Tran et. al.
introduce DR-GAN containing an encoder-decoder
structure that can disentangle the pose information in face
synthesis.
Karras et al. \cite{Karras2020ASG} introduce an intermediate latent space
derived by non-linear mapping from the original latent space and
conduct disentanglement study on this space with perceptual path length
and linear separability.
In \cite{NguyenPhuoc2019HoloGANUL}, HoloGANs are introduced by combining
inductive bias about the 3D world with GANs for learning a
disentangled representation of pose, shape, and appearance.
Recently, Lin et al. introduce a contrastive regularization to 
boost InfoGAN for disentanglement learning, and also propose a 
ModelCentrality scheme to achieve unsupervised model selection 
\cite{LinInfoGANCR}.
By looking into well-trained GANs, Bau et al. conducts
GAN dissection in \cite{bau2019gandissect}. They show
that neurals in GANs actually learn interpretable concepts,
and can be used for modifying contents in the generated images.
In a similar spirit, Shen et al. \cite{Shen2019InterpretingTL} introduces
InterFaceGAN showing GANs spontaneously learn various latent
subspaces corresponding to specific interpretable attributes.
Besides the existing works,
unsupervisedly learning disentangled representations with GANs and
quantifying their performance of disentanglement are
still unsolved problems.

\textbf{Variational Autoencoders and Variants.}
There are more systematic works of disentanglement learning
based on the VAE framework
\cite{Kingma2013AutoEncodingVB,Burgess2018UnderstandingDI},
and the most common method for approaching disentanglement is
by modeling the independence in the latent space.
As an early attempt of extending VAEs for
learning independent latent factors, Higgins et al. \cite{Higgins2017betaVAELB}
pointed out that modulating the
KL-term
in the learning objective of VAEs (known as evidence lower bound (ELBO))
with a single hyper-parameter $\beta > 1$
can encourage the model to learn independent latent factors:
\begin{align}
    \mathcal{L}(\theta, \phi; \boldsymbol{x}, \boldsymbol{z}, \beta)
    = \mathbb{E}_{q_{\phi}(\boldsymbol{z} | \boldsymbol{x})}
    [\text{log}\,p_{\theta}(\boldsymbol{x} | \boldsymbol{z})]
    - \beta D_{KL}(q_{\phi}(\boldsymbol{z} | \boldsymbol{x}) ||\,
    p(\boldsymbol{z}))
    \label{eq:vae}.
\end{align}
Later based on the decomposition of the KL term
$D_{KL}(q_{\phi}(\boldsymbol{z} | \boldsymbol{x}) ||\,p(\boldsymbol{z}))$
\cite{hoffman2016elbo,Makhzani2017PixelGANA},
it has been discovered that the KL divergence between
the aggregated posterior
$q_{\phi}(\boldsymbol{z}) \equiv
\mathbb{E}_{p_{data}(\boldsymbol{x})}
[q_{\phi}(\boldsymbol{z}|\boldsymbol{x})]$ and its factorial distribution
$\text{KL}(q_{\phi}(z)||\prod_{j}q_{\phi}(z_{j}))$
(known as the total correlation (TC) of the latent variables)
contributes most to the disentanglement purpose,
leading to the emergence of models
enforcing penalty on this term.
Kim et al. \cite{Kim2018DisentanglingBF} introduce FactorVAE to
minimize this TC term
through adopting a discriminator in the latent space
\cite{Goodfellow2014GenerativeAN,Makhzani2015AdversarialA}
with density-ratio trick \cite{Nguyen2010EstimatingDF,Banerjee2004ClusteringWB}.
Chen et al. \cite{chen2018isolating} introduce $\beta$-TCVAE to
employ a mini-bath weighted sampling for the TC estimation.
Kumar et al. \cite{Kumar2017VariationalIO} use moment matching
to penalize the divergence between aggregated posterior and the prior.
These works have been shown effective for disentangled representation
learning, especially after the introduction of various quantitative disentanglement metric
\cite{Higgins2017betaVAELB,Kim2018DisentanglingBF,chen2018isolating,Ridgeway2018LearningDD,Eastwood2018AFF,SutMilSchBau19}.
Dupont \cite{Dupont2018LearningDJ} introduces JointVAE for learning
continuous and discrete latent factors under the VAE framework for
stable training and a principled inference network.
Later Jeong et al. \cite{Jeong2019LearningDA} introduce CascadeVAE
which handles independence enforcing through information cascading and
solves discrete latent codes learning by an alternating
minimization scheme.
However, these works model disentanglement only from the
independence perspective, which may not be practical for real-world data.

\section{Methods}
\label{sec:method}
In this section, we first introduce the Variation Predictability objective
in section \ref{sec:vp_loss},
a general constraint which encourages disentanglement from the
perspective of predictability for the latent variations, and also
show its intergration with the GAN framework.
We then introduce the proposed Variation Predictability Evaluation Metric
in section \ref{sec:vp_metric}, a general evaluation method quantifying
the performance of disentanglement without relying on the ground-truth
generative factors. Our code is available at: 
\texttt{https://github.com/zhuxinqimac/stylegan2vp}.

\subsection{Variation Predictability Loss}
\label{sec:vp_loss}
We first introduce the concept of \emph{variation predictability}, defined
as follows:
\begin{definition}
    \label{def:1}


    If image pairs are generated by varying a single dimension in
    the latent codes, the \emph{variation predictability} represents how easy
    the prediction of the varied latent dimension from the image pairs is.
\end{definition}

Here we also need to define what \emph{easy} is in this context:
    a prediction is called easy if the required number 
    of training examples is small.

A high variation predictability means predicting the varied latent dimension
from the image pairs is easy, i.e. only a small number of
training image pairs are needed to identify this dimension
(consider the left part of Fig. \ref{fig:intuition}),
while a low predictability means the prediction is hard
(consider the right part of Fig. \ref{fig:intuition}).
As this concept corresponds well with our determination of
disentanglement (the left model in Fig. \ref{fig:intuition} is
disentangled while the right one is not),
we propose to utilize it as a proxy for
achieving the disentanglement objective.
Note that traditionally the assumption of
\emph{independence} in latent variables works as another proxy for the
learning of disentangled representation
\cite{Higgins2017betaVAELB,Kim2018DisentanglingBF,Dupont2018LearningDJ,Kumar2017VariationalIO,chen2018isolating,Jeong2019LearningDA},
which we think is a stronger assumption than
the proposed variation predictability
and it may not hold for real-world data.

In order to model the variation predictability from observations
to the varied latent dimension,
we adopt a straightforward implementation
by directly maximizing the mutual information between
the varied latent dimension $d$ and the
paired images ($\boldsymbol{x}_{1}$, $\boldsymbol{x}_{2}$)
derived by varying dimension $d$ in latent codes:
$I(\boldsymbol{x}_{1}, \boldsymbol{x}_{2}; d)$, where we name
this mutual information as the Variation
Predictability objective (VP objective) and
the negative of this term as the Variation Predictability loss (VP loss).
We instantiate our VP objective by intergrating it within the generative
adversarial network (GAN) framework.

\begin{figure}[t]
\begin{center}
   \includegraphics[width=\linewidth]{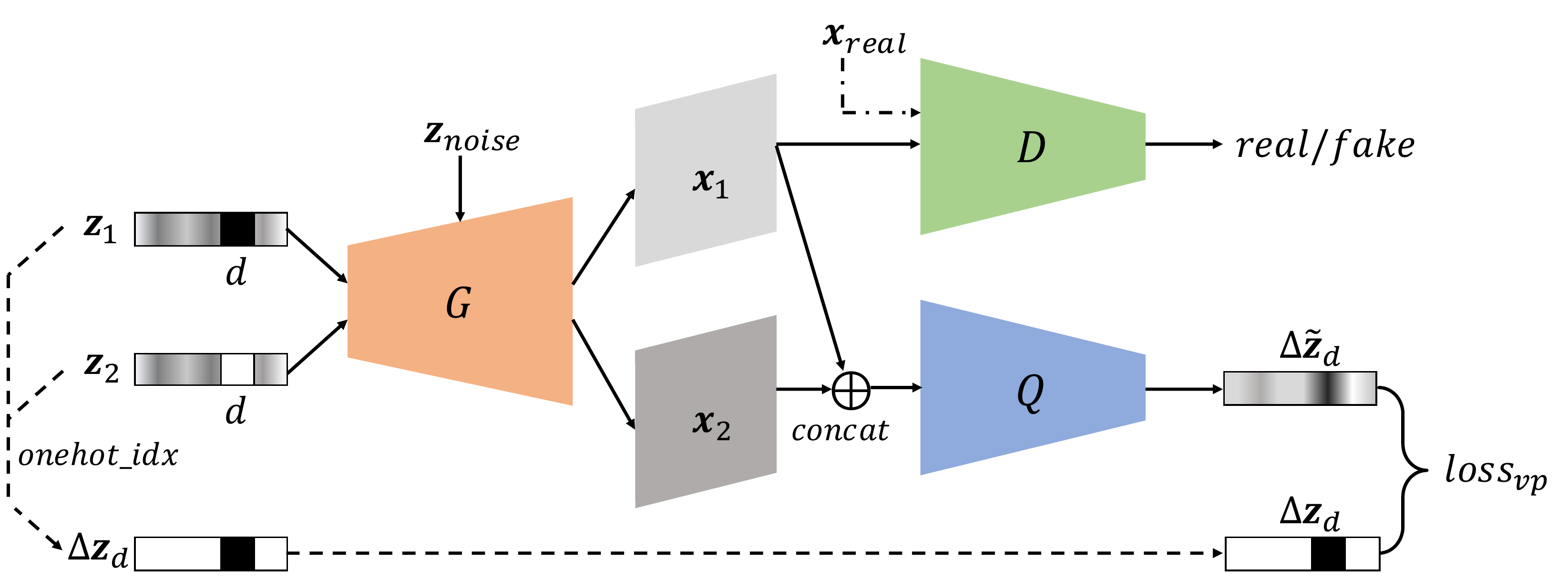}
\end{center}
    \caption{Overall architecture of our proposed model.
    The model first samples two latent codes $\boldsymbol{z}_{1}$
    and $\boldsymbol{z}_{2}$ from the latent space that
    only differs in a single dimension $d$, and this index $d$
    also serves as the target for the recognition network $Q$
    after being transformed into onehot representation
    $\Delta \boldsymbol{z}_{d}$.
    Two images are then generated with the generator
    $\boldsymbol{x}_{1} = G(\boldsymbol{z}_{1})$,
    $\boldsymbol{x}_{2} = G(\boldsymbol{z}_{2})$. $\boldsymbol{x}_{1}$ is
    fed to the discriminator to get the \emph{real/fake}
    learning signal same as in the
    standard GANs. We then concatenate both images along the channel
    axis and feed them to the recognition network $Q$ for the
    prediction of $d$.}
\label{fig:architecture}
\end{figure}

As for a brief introduction,
GAN is a generative model introduced by Goodfellow et al.
\cite{Goodfellow2014GenerativeAN} for learning the distribution of
training data. A generator network $G$ is trained to map a random noise
$\boldsymbol{z}$ to an output image $\boldsymbol{x}$:
$G(\boldsymbol{z}): \boldsymbol{z} \rightarrow \boldsymbol{x}$.
At the meantime, a discriminator network $D$ is trained to
predict whether the generated image $\boldsymbol{x}$
matches the real data distribution $p_{data}(\boldsymbol{x})$.
The training strategy of $G$ and $D$ follows a minimax game where
$G$ tries to minimize while $D$ tries to
maximize the following objective:
\begin{align}
    \underset{G}{\text{min}}\, \underset{D}{\text{max}}\, V(G, D) =
    \mathbb{E}_{\boldsymbol{x} \sim p_{data}}[\text{log}\,
    D(\boldsymbol{x})] + \mathbb{E}_{\boldsymbol{z} \sim p_{noise}}[\text{log}\,
    (1 - D(G(\boldsymbol{z})))
    ] \label{eq:gan}.
\end{align}
After convergence, the generator $G$ can
generate images look similar to the real data, while
the discriminator $D$ cannot tell whether a generated image is real or not.

The VP loss can be easily integrated into a GAN as:
\begin{align}
    \underset{G}{\text{min}}\, \underset{D}{\text{max}}\, V_{vp}(G, D) =
    V(G, D) - \alpha I(\boldsymbol{x}_{1}, \boldsymbol{x}_{2}; d)
    \label{eq:obj_with_gan},
\end{align}
where $\alpha$ is a hyper-parameter to balance the generative
quality and disentanglement quality. This model is
named as Variation Predictability GAN (VPGAN).
By optimizing the VP objective, the generator is encouraged to
synthesize images having a strong correspondance with the latent
codes so that the changes in data controlled by each latent
dimension is distinguishable from each other,
and the changes controlled by a single dimension is consistent all the time.
In other words, after the training converges
the variations in data should be
natually grouped into as many classes as the number of dimensions in
the latent space. 
In Appendix 6 we show the relation
between our objective and the InfoGAN \cite{Chen2016InfoGANIR} objective.

For the VP objective to be optimizable, we adopt the
variational lower bound of mutual information
defined as follows.
\begin{lemma}
    \label{lemma:1}
    For the mutual information between two random variables
    $I(\boldsymbol{x}; \boldsymbol{y})$, the following lower bound holds:
    \begin{align}
        I(\boldsymbol{x}; \boldsymbol{y}) \geq H(\boldsymbol{y}) +
        \mathbb{E}_{p(\boldsymbol{x},
        \boldsymbol{y})} \textnormal{log}\, q(\boldsymbol{y}|\boldsymbol{x}),
    \end{align}
    where the bound is tight when $q(\boldsymbol{y}|\boldsymbol{x}) =
    p(\boldsymbol{y}|\boldsymbol{x})$.
\end{lemma}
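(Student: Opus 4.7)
The plan is to prove the bound by rewriting the mutual information as $I(\boldsymbol{x};\boldsymbol{y}) = H(\boldsymbol{y}) - H(\boldsymbol{y}|\boldsymbol{x})$, and then showing that the conditional entropy term dominates the expectation under any variational proposal $q(\boldsymbol{y}|\boldsymbol{x})$, with the gap being an average Kullback--Leibler divergence that is non-negative.

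First I would expand the conditional entropy as $H(\boldsymbol{y}|\boldsymbol{x}) = -\mathbb{E}_{p(\boldsymbol{x},\boldsymbol{y})}\log p(\boldsymbol{y}|\boldsymbol{x})$, so that
\begin{equation*}
I(\boldsymbol{x};\boldsymbol{y}) = H(\boldsymbol{y}) + \mathbb{E}_{p(\boldsymbol{x},\boldsymbol{y})}\log p(\boldsymbol{y}|\boldsymbol{x}).
\end{equation*}
Next I would subtract and add $\mathbb{E}_{p(\boldsymbol{x},\boldsymbol{y})}\log q(\boldsymbol{y}|\boldsymbol{x})$ inside the expression, producing a term of the form
\begin{equation*}
\mathbb{E}_{p(\boldsymbol{x})}\bigl[\,\mathbb{E}_{p(\boldsymbol{y}|\boldsymbol{x})}\log(p(\boldsymbol{y}|\boldsymbol{x})/q(\boldsymbol{y}|\boldsymbol{x}))\bigr] = \mathbb{E}_{p(\boldsymbol{x})} D_{KL}\bigl(p(\boldsymbol{y}|\boldsymbol{x}) \,\|\, q(\boldsymbol{y}|\boldsymbol{x})\bigr).
\end{equation*}

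Invoking Gibbs' inequality (non-negativity of KL divergence) on this inner divergence for every $\boldsymbol{x}$, the average over $p(\boldsymbol{x})$ remains non-negative, which gives the desired inequality
\begin{equation*}
I(\boldsymbol{x};\boldsymbol{y}) \geq H(\boldsymbol{y}) + \mathbb{E}_{p(\boldsymbol{x},\boldsymbol{y})} \log q(\boldsymbol{y}|\boldsymbol{x}).
\end{equation*}
Tightness is then read off directly: the gap equals $\mathbb{E}_{p(\boldsymbol{x})} D_{KL}(p(\boldsymbol{y}|\boldsymbol{x}) \,\|\, q(\boldsymbol{y}|\boldsymbol{x}))$, which vanishes iff $q(\boldsymbol{y}|\boldsymbol{x}) = p(\boldsymbol{y}|\boldsymbol{x})$ almost everywhere under $p(\boldsymbol{x})$.

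There is no real obstacle here, since this is the standard variational bound used in InfoGAN and Barber--Agakov; the only subtlety worth flagging is a measure-theoretic one, namely that we implicitly assume $q(\boldsymbol{y}|\boldsymbol{x}) > 0$ wherever $p(\boldsymbol{y}|\boldsymbol{x}) > 0$ so that the KL divergence is well-defined and the log is finite. For the discrete latent dimension index $d$ used in the VP objective this regularity is immediate, so the bound can be plugged directly into \eqref{eq:obj_with_gan} to obtain a tractable training objective.
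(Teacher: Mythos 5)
Your proof is correct and follows essentially the same route as the paper's appendix argument (the standard Barber--Agakov/InfoGAN derivation): decompose $I(\boldsymbol{x};\boldsymbol{y}) = H(\boldsymbol{y}) + \mathbb{E}_{p(\boldsymbol{x},\boldsymbol{y})}\log p(\boldsymbol{y}|\boldsymbol{x})$, then identify the gap to the variational bound as $\mathbb{E}_{p(\boldsymbol{x})} D_{KL}\bigl(p(\boldsymbol{y}|\boldsymbol{x}) \,\|\, q(\boldsymbol{y}|\boldsymbol{x})\bigr) \geq 0$, with equality exactly when $q = p$ almost everywhere. Your added remark on the absolute-continuity condition for the KL term is a sensible (if minor) refinement, and it indeed holds trivially for the discrete index $d$ used in the VP objective.
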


\begin{proof}
    See Appendix 7.
    \hfill $\square$
\end{proof}

Based on Lemma \ref{lemma:1}, we can get the lower bound of our VP objective:
\begin{align}
    \mathcal{L}_{vp}(\theta, \phi; &\boldsymbol{x}_{1}, \boldsymbol{x}_{2}, d) \\
    &=
    \mathbb{E}_{\boldsymbol{x}_{1}, \boldsymbol{x}_{2}, d
    \sim p_{\theta}(\boldsymbol{x}_{1}, \boldsymbol{x}_{2}, d)}
    \text{log}\,
    q_{\phi}(d\,|\boldsymbol{x}_{1}, \boldsymbol{x}_{2})
    + H(d) \label{eq:obj} \\
    &\leq I(\boldsymbol{x}_{1}, \boldsymbol{x}_{2}; d).
\end{align}

For the sampling of $d$, $\boldsymbol{x}_{1}$, $\boldsymbol{x}_{2}$,
we first sample the dimension index $d$ out of the number of
continuous latent variables (in this paper we focus on continuous
latent codes and leave the modeling for discrete latent codes for future works),
then we sample a latent code and sample twice on dimension $d$ so
we get a paired latent codes $[\boldsymbol{z}_{1}, \boldsymbol{z}_{2}]$
differing only on dimension $d$.
The images $\boldsymbol{x}_{1}$ and $\boldsymbol{x}_{2}$ are generated by
$G(\boldsymbol{z}_{1})$ and $G(\boldsymbol{z}_{2})$.
The conditional distribution
$q_{\phi}(d\,|\boldsymbol{x}_{1} \boldsymbol{x}_{2})$
in Eq. \ref{eq:obj} is modeled as a recognizor $Q$.
This recognizor network takes the
concatenation of the two generated images as inputs to
predict the varied dimension $d$ in the latent codes.
The architecture of our model is shown in Fig. \ref{fig:architecture}.

\subsection{Variation Predictability Disentanglement Metric}
\label{sec:vp_metric}
As the variation predictability natually defines a way to
distinguish disentangled representations and entangled representations
by the difficulty of predicting the varied latent dimensions, we can propose
a method to quantitatively measure the performance of
disentanglement once we quantify the difficulty of the prediction.
In this paper we quantify this difficulty by the performance of
doing few-shot learning \cite{Wang2019GeneralizingFA}. The intuition is that
a prediction can be seen as easy if
only a small number of training examples are needed for the prediction.
From another viewpoint, only requiring a small number of training examples
means the representation can generalize well for the prediction
task, which is also a property of \emph{disentanglement}, so this modeling
is also consistent with disentanglement itself.
We name our proposed metric as Variation Predictability metric
(VP metric), which is defined as follows:
\begin{enumerate}
    \item For a generative model, sample $N$ indices denoting which dimension
        to modify in the latent codes: $\{d^{1}, d^{2}, ..., d^{N}\}$.
    \item Sample $N$ pairs of latent codes that each pair
        only differs in the dimension sampled by step 1:
        $\{[\boldsymbol{z}^{1}_{1}, \boldsymbol{z}^{1}_{2}],
        [\boldsymbol{z}^{2}_{1}, \boldsymbol{z}^{2}_{2}], ...,
        [\boldsymbol{z}^{N}_{1},
        \boldsymbol{z}^{N}_{2}]\, | \,
        \text{Dim}_{\neq 0}(\boldsymbol{z}^{i}_{1} - \boldsymbol{z}^{i}_{2}) =
        d^{i}\}$.
    \item For each latent code pair
        $[\boldsymbol{z}^{i}_{1}, \boldsymbol{z}^{i}_{2}]$,
        generate the corresponding image pair
        $[\boldsymbol{x}^{i}_{1} = G(\boldsymbol{z}^{i}_{1}),
        \boldsymbol{x}^{i}_{2} = G(\boldsymbol{z}^{i}_{2})]$
        and their difference
        $\Delta \boldsymbol{x}^{i} = \boldsymbol{x}^{i}_{1} -
        \boldsymbol{x}^{i}_{2}$. This forms a dataset
        $\{(\Delta \boldsymbol{x}^{1}, d^{1}),
        (\Delta \boldsymbol{x}^{2}, d^{2}), ...,
        (\Delta \boldsymbol{x}^{N}, d^{N})
        \}$ with the difference of image pairs as inputs
        and the varied dimension as labels.
    \item Randomly divide the dataset into a training set and a test set
        with example numbers $\eta N$ and $(1-\eta)N$ respectively,
        where $\eta$ is the ratio of training set.
    \item Train a recognition network taking
        $\Delta \boldsymbol{x}^{i}$ as input for predicting $d^{i}$ on the
        training set. Report the accuracy $acc_{s}$ on test set.
    \item Repeat step 1 to step 5 $S$ times to get accuracies
        $\{acc_{1}, acc_{2}, ..., acc_{S}\}$.
    \item Take the average of the accuracies
        $Score_{dis} = \frac{1}{S}\sum_{s=1}^{S}acc_{s}$ as the
        disentanglement score for the model. Higher is better.
\end{enumerate}

The training set ratio $\eta$ should not be large since we need to keep
this setting as a few-shot learning so that the prediction is
sufficiently difficult to distinguish disentangled representations
and entangled representations. The reason we use the data difference
$\Delta \boldsymbol{x}$ as inputs is that this enforces the model to focus
on difference features in data caused by the varied dimension rather than
contents in the images, and we also found this implementation can achieve
a better correlation with other metrics like the FactorVAE score.
In our experiments, we choose $N = 10,000$ and $S = 3$.
For Dsprites and 3DShapes datasets we choose $\eta = 0.01$ and for CelebA
and 3DChairs datasets we choose $\eta = 0.1$ as there are more dimensions
in latent codes used for CelebA and 3DChairs. The main differences between
our proposed metric and other disentanglement metrics
\cite{Higgins2017betaVAELB,Kim2018DisentanglingBF,chen2018isolating,Ridgeway2018LearningDD,Eastwood2018AFF,SutMilSchBau19}
are that ours does not
require the existence of ground-truth generative factors
(which is not available in real-world data), and our metric does not
require an encoder so GAN-based models can also be evaluated.

\subsection{Implementations}
\label{sec:implementations}
As discovered by the recent works of StyleGANs
\cite{Karras2020ASG,Karras2019AnalyzingAI} that the latent codes
of a generator can be treated as style codes modifying a learned
constant for achieving a higher-quality generation and stabler training
than tranditional GANs, we adopt a similar strategy
to ease the training procedure of GANs in our experiments.
However, unlike StyleGANs which take a side mapping network to transform
the input codes into multiple intermediate latent codes, we directly feed the
latent codes sampled from the prior distribution into the generator network
without a mapping network to learn
the disentangled representation in the prior latent space.
The network architectures and parameters for different experiments are
shown in Appendix 10.

\section{Experiments}
\label{sec:experiments}
We first conduct experiments on popular disentanglement evaluation datasets
Dsprites \cite{dsprites17} and 3DShapes \cite{Kim2018DisentanglingBF}
to validate the effectiveness
of our proposed VP disentanglement metric. Second we evaluate our method
with VAE framework to show its complementarity to independence modeling
for achieving disentanglement. Then we equip our models with GAN framework
to conduct experiments on datasets without ground-truth generative
factors 3DChairs \cite{Aubry2014Seeing3C} and CelebA \cite{Liu2014DeepLF},
and use our proposed VP metric to quantitatively evaluate the disentanglement
performance of our models.

\begin{figure}[t]
\begin{center}
   \includegraphics[width=\linewidth]{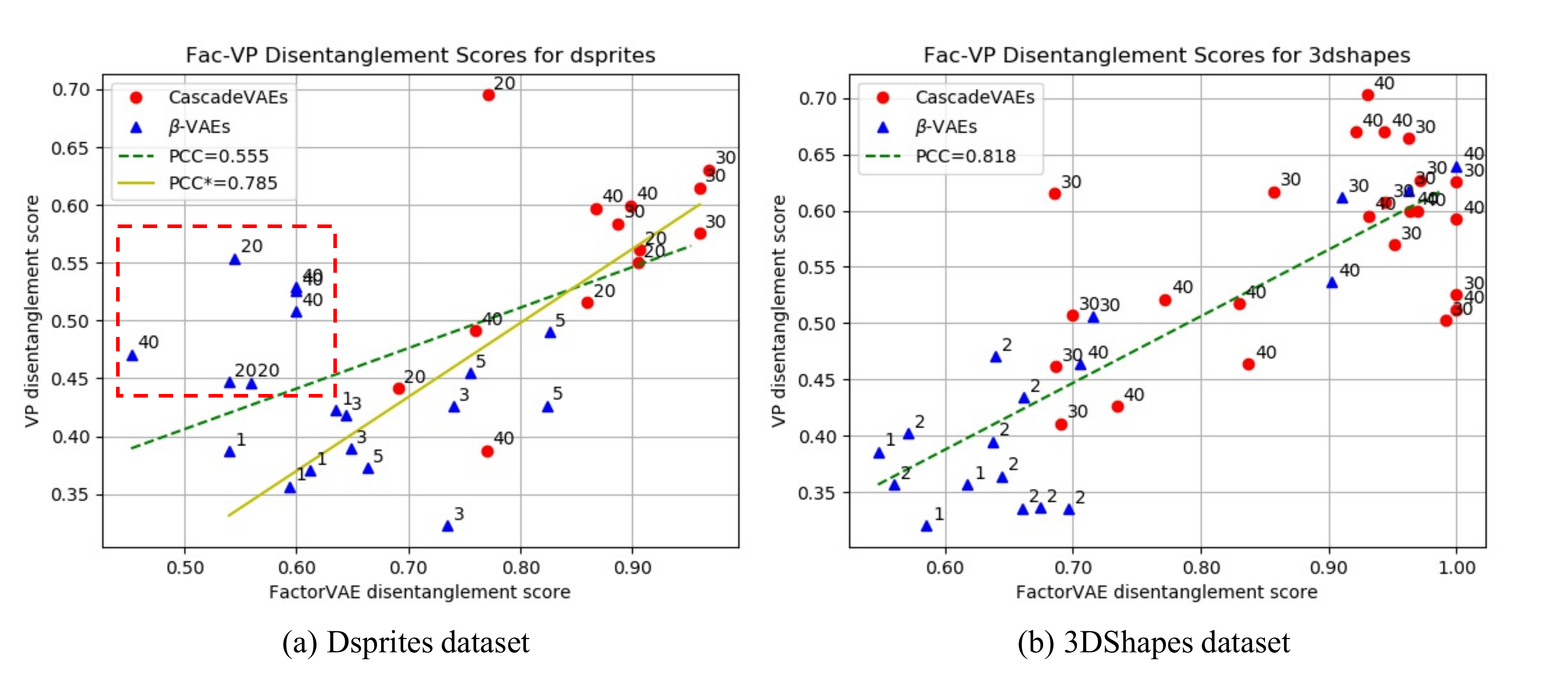}
\end{center}
    \caption{These are scatter plots shown between the FactorVAE
    disentanglement metric and our proposed VP disentanglement metric on
    Dsprites dataset and 3DShapes dataset. The $PCC$ denotes the
    Pearson's Correlation Coefficient, and the green dash line is the
    linear regression line. See the main text for discussion.
    }
\label{fig:correlation_show}
\end{figure}

\subsection{VP Disentanglement Evaluation Metric}
In this section, we evaluate the effectiveness of our proposed VP
disentanglement metric. Specifically, we reimplement
a relatively basic disentanglement model $\beta$-VAE
\cite{Burgess2018UnderstandingDI} and
a more advanced model CascadeVAE \cite{Jeong2019LearningDA},
and train them with different hyper-parameters and random seeds
to cover a large range of model performance.
Then we obtain their FactorVAE disentanglement metric scores,
a widely-used disentanglement measurement shown to
be correlated well with other disentanglement metrics and with
qualitative evaluation \cite{Locatello2018ChallengingCA,Kim2018DisentanglingBF}.
We then obtain the VP metric scores of the trained models
and see if these scores have correlation with
the ones calculated based on the FactorVAE metric.
Note that the FactorVAE metric requires the
existence of ground-truth generative factors of each data point in
the training dataset for
the performance measurement, while our VP metric does not use
the ground-truth factors.
This experiment is repeated on Dsprites dataset and 3DShapes dataset,
which are two most popular datasets for the
learning of disentangled representations.
The correlation results for both datasets are shown in Fig.
\ref{fig:correlation_show} (a) and (b) respectively.
The $\beta$ hyper-parameter in $\beta$-VAE is sampled from $\{1, 2, 3, 5,
20, 30, 40\}$,
and the hyper-parameters $\beta_{low}$ and $\beta_{high}$ in CascadeVAE
are sampled from $\{2, 3, 4\}$ and $\{20, 30, 40\}$ respectively. The models
are run with multiple random seeds.

From Fig. \ref{fig:correlation_show} (a) and (b), we can see there is
an evident correlation between our VP metric and the FactorVAE metric.
Note that the training of these models are not guided by the VP objective, and
there is no relation between the design of FactorVAE metric and
our proposed metric. Considering our metric requires no ground-truth factors
and the performances of these models suffer from the
impact of randomness during training,
this correlation is already very strong.
For Dsprites dataset, the Pearson's Correlation Coefficient
$0.555$ is not as
high as the one calculated on 3DShapes dataset $PCC=0.818$.
If we take a closer look, we can see the abnormal events happen
when $\beta$ is high in $\beta$-VAE models. This is because when $\beta$
is high (the ones in the red box in Fig. \ref{fig:correlation_show} (a)),
the model tends to ignore the \emph{shape} and \emph{rotation}
information in the Dsprites dataset while keeping \emph{xy-shift} and
\emph{scaling} disentangled, which leads to a low FactorVAE metric score.
However, this ignorance of information is not known
by our VP metric as it takes nothing from the ground-truth factors.
This causes our metric only take into account the other remaining factors
which are well-disentangled so high VP scores are obtained.
If we omit the high-$\beta$ models (in the red box), we get a
much higher $PCC=0.785$ that is close to the one obtained on 3DShapes dataset.

In summary, our proposed VP disentanglement metric correlates well with
FactorVAE disentanglement metric even though ours does not rely on the
ground-truth generative factors. This property makes our
metric a general evaluation method for all disentanglement models with
a generator, and it is applicable to datasets with or without
ground-truth generative factors. 
In Appendix Section 8, 
we conducts experiments to show why small 
$\eta$ is preferable in our proposed VP metric.

\begin{table}[t]
    \parbox{.49\linewidth}{
    \begin{center}
    \begin{tabular}{l|c|c}
    \hline\hline
        Model & VP Score & FacVAE Score \\
    \hline\hline
        CasVAE & 59.2 (4.6) & 91.3 (7.4) \\
        CasVAE-VP & \textbf{65.5} (5.1) & \textbf{91.7 (6.9)} \\
    \hline\hline
    \end{tabular}
    \end{center}
    \caption{Disentanglement scores on Dsprites dataset.}
    \label{table:dsprites_scores}
    }
    \parbox{.02\linewidth}{\ }
    \parbox{.49\linewidth}{
    \begin{center}
    \begin{tabular}{l|c|c}
    \hline\hline
        Model & VP Score & FacVAE Score \\
    \hline\hline
        CasVAE & 62.3 (4.9) & 94.7 (2.1) \\
        CasVAE-VP & \textbf{66.4} (5.6) & \textbf{95.6} (2.4) \\
    \hline\hline
    \end{tabular}
    \end{center}
    \caption{Disentanglement scores on 3DShapes dataset.}
    \label{table:3dshapes_scores}
    }
\end{table}

\subsection{VP Models with VAE Framework}
In this section we apply our VP models to popular disentanglement datasets
Dsprites and 3DShapes with VAE framework. We equip our VP loss to the
state-of-the-art disentanglement model CascadeVAE \cite{Jeong2019LearningDA}
to see if our variation predictability is complementary to the statistical
independence modeling and can boost disentangled representation learning.
The averaged disentanglement scores of 10 random seeds on two datasets
are shown in Table \ref{table:dsprites_scores} and Table
\ref{table:3dshapes_scores}. From the tables we can see our
model can boost the disentanglement performance of CascadeVAE, but the
performance improvement is not very significant.
We believe this is because on datasets
like Dsprites and 3DShapes the independence assumption is
crucial for the disentanglement learning, which has the most impact
on the learning of disentangled representations. However, this experiment
still shows our VP loss is complementary to the statistical independence
modeling, and is beneficial for disentanglement.
In Appendix Section 9, we show more 
quantitative comparisons on these two datasets.

\begin{table}[t]
    \parbox{.49\linewidth}{
    \begin{center}
    \begin{tabular}{l|c|c}
    \hline\hline
        Model & VP Score & FID \\
    \hline\hline
        VPGAN-flat $\alpha = 0.001$ & 58.4 & \textbf{22.3} \\
        VPGAN-flat $\alpha = 0.01$ & 62.1 & 27.8 \\
        VPGAN-flat $\alpha = 0.1$ & 64.5 & 32.8 \\
    \hline
        VPGAN-hierar $\alpha = 0.001$ & 64.6 & 53.6 \\
        VPGAN-hierar $\alpha = 0.01$ & 66.8 & 47.4 \\
        VPGAN-hierar $\alpha = 0.1$ & \textbf{70.3} & 56.9 \\
    \hline\hline
    \end{tabular}
    \end{center}
    \caption{Ablation studies of implementation and hyper-parameter $\alpha$
        on CelebA.}
    \label{table:ablation_celeba}
    }
    \parbox{.02\linewidth}{\ }
    \parbox{.49\linewidth}{
    \begin{center}
    \begin{tabular}{l|c|c}
    \hline\hline
        Model & VP Score & FID \\
    \hline\hline
        GAN & 12.9 & \textbf{20.4} \\
        InfoGAN $\lambda = 0.001$ & 34.5 & 24.1 \\
        InfoGAN $\lambda = 0.01$ & 25.3 & 43.3 \\
        FactorVAE & \textbf{75.0} & \textbf{\textcolor{red}{73.9}} \\
    \hline
        VPGAN-flat $\alpha = 0.1$ & 64.5 & 32.8 \\
        VPGAN-hierar $\alpha = 0.1$ & \textbf{70.3} & 56.9 \\
    \hline\hline
    \end{tabular}
    \end{center}
    \caption{Disentanglement models comparison on CelebA dataset.}
    \label{table:compare_celeba}
    }
\end{table}

\subsection{VP Models and Metric on CelebA with GAN Framework}
In this section, we apply our VP models and metric to
the challenging CelebA \cite{Liu2014DeepLF} human face dataset.
This dataset consists of over 200,000 images of cropped real-world
human faces of various poses, backgrounds and
facial expressions. We crop the center $128 \times 128$ area of the
images as input for all models.

We first conduct ablation studies on two factors that influence the
performance of disentanglement in our models: 1) hierarchical inputs and
2) the hyper-parameter $\alpha$. The hierarchical latent input
impact is a phenomenon discovered in
\cite{Karras2020ASG,Karras2019AnalyzingAI,Zhao2017LearningHF} that
when feeding the input latent codes into different layers of the
generator, the learned representations tend to capture different levels
of semantics.
We compare this hierarchical implementation with models with a
traditional flat-input implementation
to see its impact on disentanglement.
The number of network layers and the latent codes are kept the same.
In our experiments, we use the VP metric and FID \cite{Heusel2017GANsTB}
to give a quantitative evaluation on how these two factors
impact the disentanglement and the image quality. The results are
shown in Table \ref{table:ablation_celeba}. The latent traversals
of VPGAN-flat $\alpha = 0.1$, InfoGAN $\lambda=0.001$,
VPGAN-hierarchical $\alpha = 0.1$, and FactorVAE $\gamma = 6.4$ 
are shown in Fig. \ref{fig:traversal_1},
and Fig. \ref{fig:traversal_2}, 
and more latent traversals can
be found in Appendix 11, 12, and 13.

From Table \ref{table:ablation_celeba},
we can see the VP disentanglement score has a positive correlation with
the hyper-parameter $\alpha$.
We can also summarize that
the hierarchical inputs can boost the
disentanglement score by an evident margin, indicating the
hierarchical nature in deep neural networks can be a good ingredient
for the learning of disentangled representations.
On the other hand, the hyper-parameter $\alpha$ has a
slight negative impact on FID, 
therefore it is better to choose a
relatively small $\alpha$ to keep a balanced tradeoff between
disentanglement and image quality. Nevertheless,
the hierarchical input implementation
seems to have a more significant negative impact
on the FID, which we believe this technique should be
better used with larger number of latent codes and more
advanced architectures as in
\cite{Karras2020ASG,Karras2019AnalyzingAI} to
take full advantage of it.

From Table \ref{table:compare_celeba}, we can see our
VPGANs can achieve highest disentanglement scores
among GAN-based models and can even achieve close performance
as FactorVAE which models independence in the latent space.
However, the FactorVAE has a bad FID score, meaning the generated
images are lack of fidelity significantly.
On the contrary, our VPGANs can keep a better FID especially
the flat-input version.
We can see InfoGANs can achieve a certain level of disentanglement,
but their performance is significantly lower than VPGANs.
In practice, we also found the training
of InfoGANs are less stable than our VPGANs where InfoGANs may result
in generating all-black images, even though both types of models
are using the same generative and discriminative networks.
As a summary, our VPGANs keep a more balanced tradeoff between
the disentanglement and generation quality than the compared models.



\begin{figure}[t!]
\begin{center}
   \includegraphics[width=\linewidth]{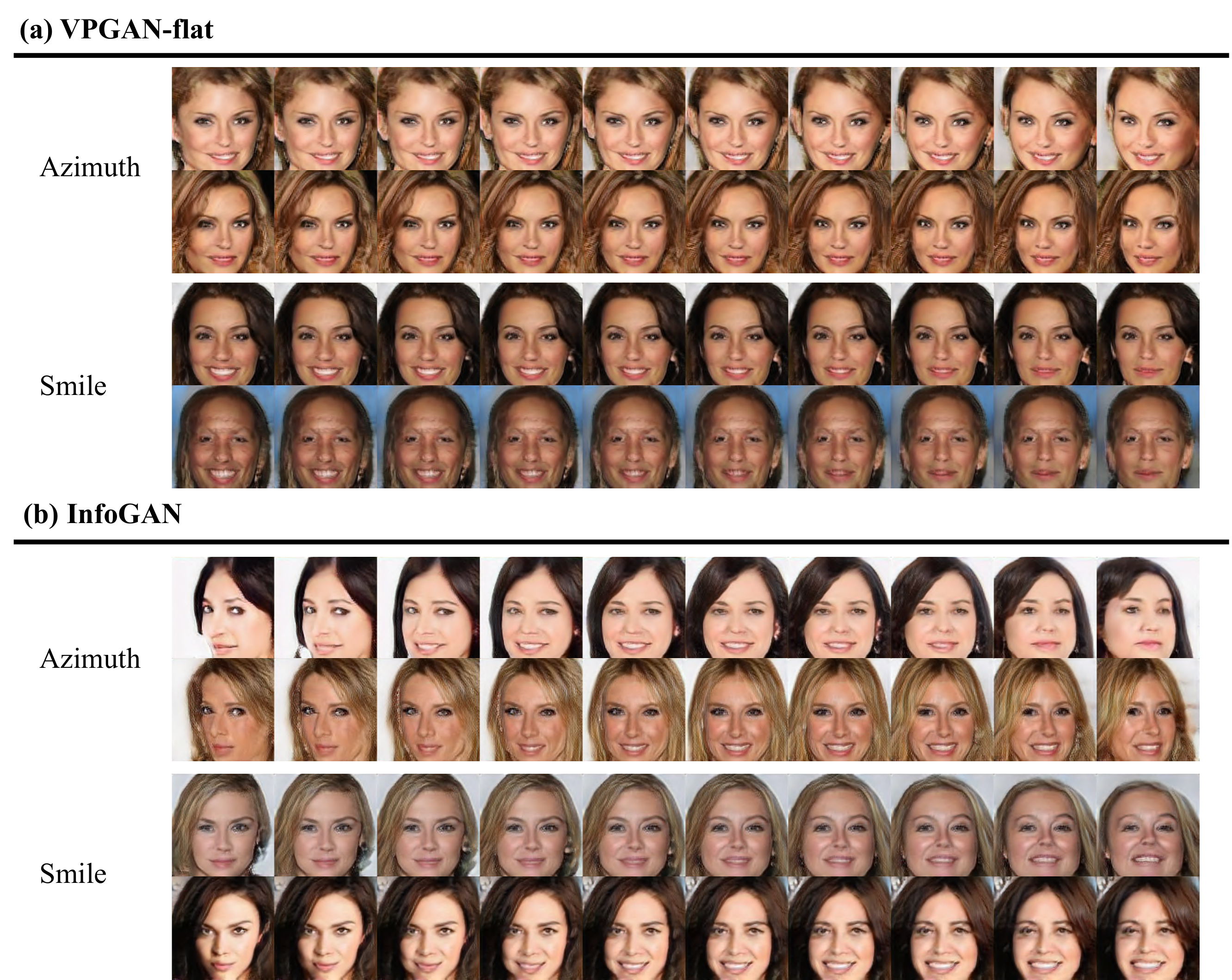}
\end{center}
    \caption{Latent traversals of VPGAN-flat and InfoGAN models
    on CelebA dataset. More latent traversals can be found in
    Appendix 11 and 12.}
\label{fig:traversal_1}
\end{figure}

\begin{figure}[t]
\begin{center}
   \includegraphics[width=\linewidth]{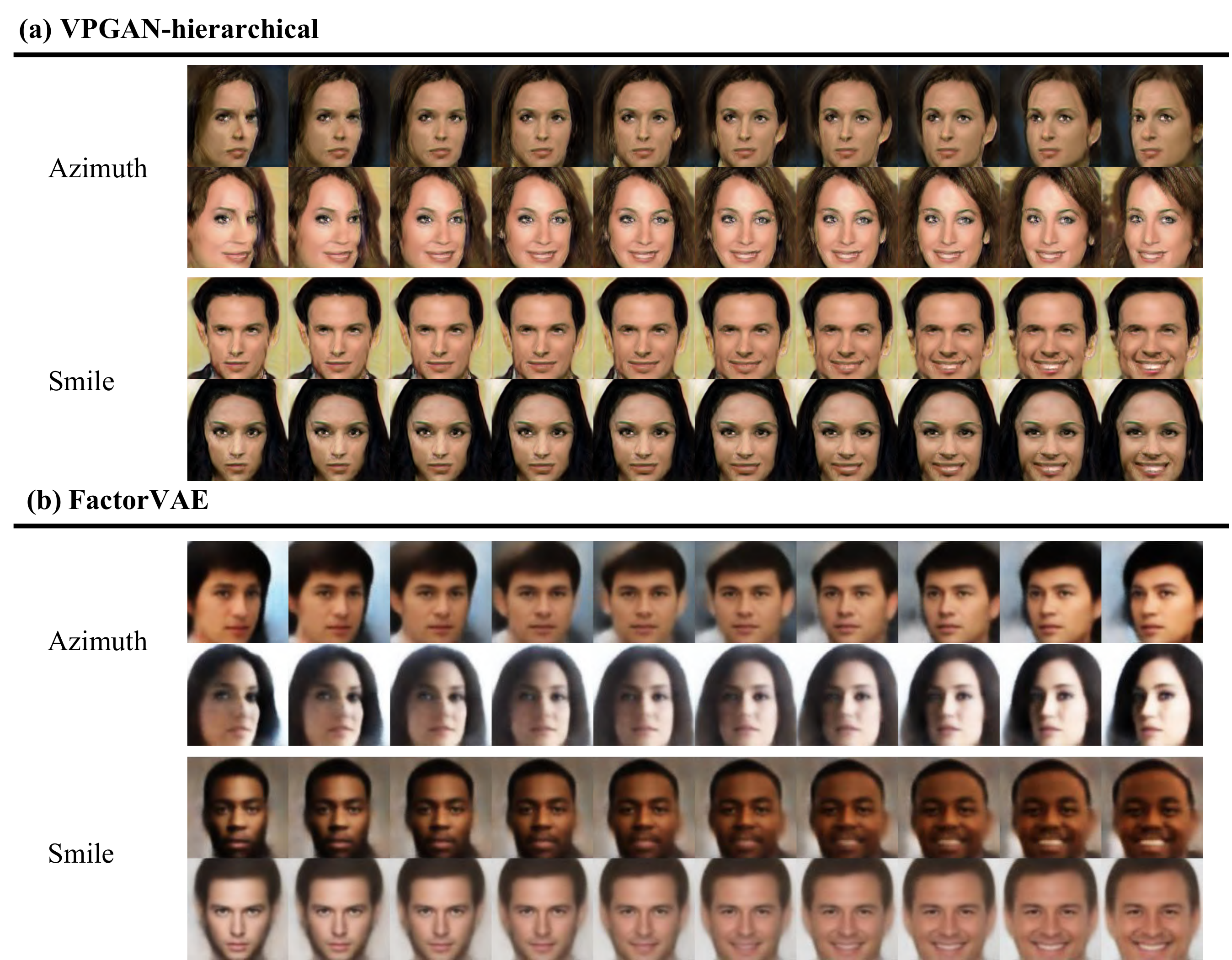}
\end{center}
    \caption{Latent traversals of VPGAN-hierarchical and FactorVAE models
    on CelebA dataset. More latent traversals can be found in
    Appendix 13.}
\label{fig:traversal_2}
\end{figure}

\begin{table}[t]
    \begin{minipage}{0.4\linewidth}
        \begin{center}
        \begin{tabular}{l|c|c}
        \hline\hline
            Model & VP Score & FID \\
        \hline\hline
            InfoGAN $\lambda = 0.1$ & 36.7 & 30.4 \\
            VPGAN $\alpha = 100$ & \textbf{42.0} & 32.1 \\
        \hline\hline
        \end{tabular}
        \end{center}
        \caption{Experiments on 3DChairs dataset comparing 
        \label{table:chairs_comp}
        InfoGAN and VPGAN.}
        \label{table:exp_chairs}
    \end{minipage}\hfill
    \begin{minipage}{0.55\linewidth}
        \begin{center}
        \includegraphics[width=\linewidth]{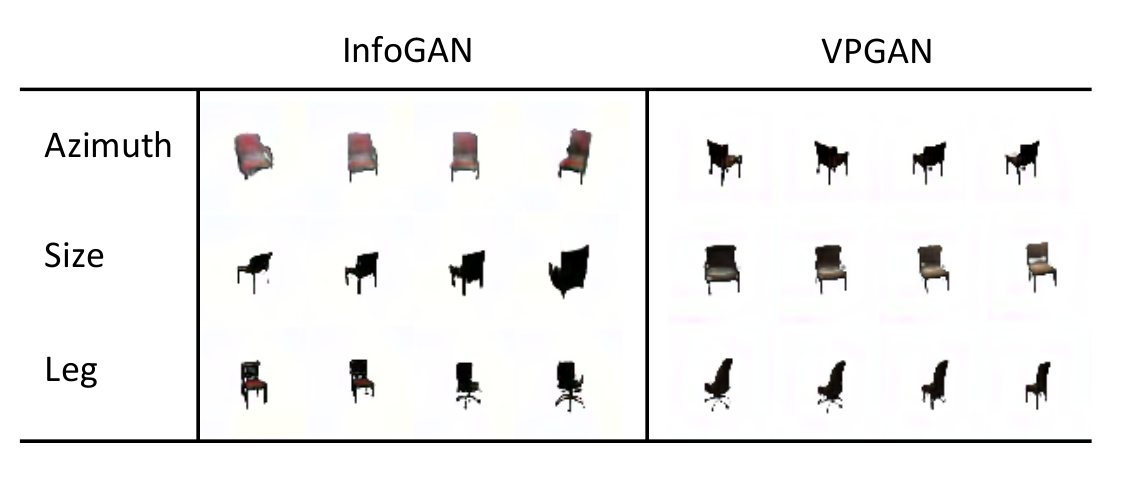}
        \captionof{figure}{Latent traversals on 3DChairs.}%
        \label{fig:chairs_traversal}
        \end{center}
    \end{minipage}
\end{table}

In Fig. \ref{fig:traversal_2} and Fig. \ref{fig:traversal_1} we
qualitatively show the performance of our VPGANs, 
InfoGAN and FactorVAE baselines in disentanglement by latent traversals.
From Fig. \ref{fig:traversal_1}, we can see our model learns a
cleaner semantics of azimuth while InfoGAN entangles azimuth with smile.
Our model also learns a better latent code for controlling smile, while
InfoGAN entangles smile and elevation into a single unit. 
In Appendix 11, 12 and 13, we show our VPGANs
can learn more semantics (azimuth, brightness, hair color, makeup,
smile, fringe, saturation, elevation, gender, lighting)
than InfoGANs (azimuth, brightness, hair color, fringe,
saturation, smile, gender, lighting). 
From Fig. \ref{fig:traversal_2}, we can see the FactorVAE entangles 
smile with some level of skin texture information, while 
our model achieves a cleaner disentanglement.
Also the results from FactorVAE are highly blurred, resulting in low FID.

There is an interesting phenomenon that for the learned disentangled 
representations in our models, not all dimensions encode variations.
We find there are around 1/3 of dimensions capturing no information 
(or too subtle to observe by eyes). Disentanglement prefers this property 
because it means the model does not randomly encode entangled information 
into the rest of the dimensions but instead deactivates them. 
When the number of latent factors is set to 25 - 30, the learning 
is stable and almost all semantics shown can be 
learned. For latent factors less than 15, we observe some semantics are 
absent or entangled.


\subsection{Experiments on 3D Chairs}
We compare InfoGAN and VPGAN on 3D Chairs dataset.
Quantitative results are shown in Table \ref{table:chairs_comp} 
and the latent traversals are shown in Fig. \ref{fig:chairs_traversal}. 
As we can see, VPGAN achieves a higher disentanglement score than 
InfoGAN at the cost of a slight increase in FID, which agrees 
with what we found in the CelebA experiments. From the traversals, 
our VPGAN learns a cleaner latent code on controlling azimuth 
semantics while InfoGAN entangles it with some shape information. 
However, the performance of VPGAN on this dataset is not as 
impressive as on CelebA, indicating a more delicate modeling 
than the variation predictability assumption is 
required for this dataset to achieve a perfect disentanglement.

\section{Conclusions}
In this paper, we introduced the latent \emph{Variation Predictability}
as a new proxy for learning disentangled representations.
By exploiting the latent variation predictability, 
we introduced the VP objective,
which maximizes the mutual information between the varied dimension
in the latent codes and the corresponding generated image pairs.
Apart from the VP objective, we also proposed a new evaluation metric
for quantitative measurement of disentanglement, which
exploits the prediction difficulty of the varied dimension in
the latent codes to quantify disentanglement. Different from
other disentanglement metrics, our proposed VP metric
does not require the existence of ground-truth generative factors.
Experiments confirm the effectiveness of our model and metric,
indicating the variation predictability can be exploited as a feasible
alternative to statistical independence for modeling disentanglement
in real-world data.
For future works, we aim to extend our work to downstream
applications like photorealistic image synthesis,
domain adaptation, and image editing.

\textbf{Acknowledgment}
This work was supported by Australian Research Council Projects 
FL-170100117, DP-180103424 and DE180101438. We thank Jiaxian Guo 
and Youjian Zhang for their constructive discussions.

\clearpage
%
%
\bibliographystyle{splncs04}
\bibliography{egbib}

\end{document}